\newtheorem{theorem}{Theorem}
\newtheorem{definition}{Definition}
\newtheorem{lemma}{Lemma}
\DeclareMathOperator*{\argmin}{arg\,min}
\DeclarePairedDelimiter\abs{\lvert}{\rvert}
\newcommand{\cD}{\mathcal{D}}
\newcommand{\vw}{\boldsymbol{w}}
\newcommand{\vh}{\boldsymbol{h}}
\begin{document}

\title{The Effect of Quantization in Federated Learning: A R\'enyi Differential Privacy Perspective
}

\author{\IEEEauthorblockN{Tianqu Kang, Lumin Liu, Hengtao He, Jun Zhang, S. H. Song, and Khaled B. Letaief}\\
\IEEEauthorblockA{
Dept. of ECE, The Hong Kong University of Science and Technology, Hong Kong \\
Email: tkang@connect.ust.hk, lliubb@ust.hk, eehthe@ust.hk, eejzhang@ust.hk, eeshsong@ust.hk, eekhaled@ust.hk
}}

\maketitle

\begin{abstract}

Federated Learning (FL) is an emerging paradigm that holds great promise for privacy-preserving machine learning using distributed data. To enhance privacy, FL can be combined with Differential Privacy (DP), which involves adding Gaussian noise to the model weights. However, FL faces a significant challenge in terms of large communication overhead when transmitting these model weights. To address this issue, quantization is commonly employed. Nevertheless, the presence of quantized Gaussian noise introduces complexities in understanding privacy protection.  This research paper investigates the impact of quantization on privacy in FL systems. We examine the privacy guarantees of quantized Gaussian mechanisms using Rényi Differential Privacy (RDP). By deriving the privacy budget of quantized Gaussian mechanisms, we demonstrate that lower quantization bit levels provide improved privacy protection. To validate our theoretical findings, we employ Membership Inference Attacks (MIA), which gauge the accuracy of privacy leakage. The numerical results align with our theoretical analysis, confirming that quantization can indeed enhance privacy protection. This study not only enhances our understanding of the correlation between privacy and communication in FL but also underscores the advantages of quantization in preserving privacy.

\end{abstract}

\begin{IEEEkeywords}
Federated Learning, differential privacy, membership inference attack
\end{IEEEkeywords}

\section{Introduction} \label{intro}

The success of deep learning is highly dependent on the availability of data, and there is a growing need to utilize private data to train deep learning models. Federated Learning (FL) is a privacy-preserving learning framework, where clients can collaboratively train a model without sharing their data \cite{mcmahan2017communication}, \cite{decentalized}. FedAvg is the most popular FL algorithm, where model weights are uploaded and aggregated at the server. However, subsequent studies revealed that privacy leakage still happens when malicious attackers have access to the model weights, i.e., gradient inversion attack \cite{Hatamizadeh_2022_CVPR} and membership inference attack \cite{MIA2017}. To further enhance the privacy protection of FL systems, FL with Differential Privacy (DP) guarantees, DPFL in short, is proposed \cite{brendan2018learning,FLDP2020,pmlr-v151-noble22a}. DP can quantify and track the privacy loss of the system. The Gaussian mechanism is mostly adopted to provide DP, where each client adds Gaussian noise to the model weights before uploading them to the server for aggregation.

Large communication costs in transmitting the weights are another major challenge for FL. An effective method to reduce the communication overhead is quantizing the weights \cite{reisizadeh2020fedpaq, LiuTWC2022}. 
There are mainly two methods to achieve the communication-efficient DPFL. One is to add discrete noise to the quantized weight \cite{cpsgd, discreteGaussian,bnn}. The privacy guarantee of such mechanisms can be directly derived by analyzing the discrete noise itself. Another approach is to quantize the weights perturbed by Gaussian noise \cite{FLQuan,wang2021d2pfed}, which is more straightforward to implement than discrete noise and commonly adopted in practice. However, the precise measure of privacy protection becomes difficult when quantization is introduced.

We focus on the second approach, where quantization is applied after adding Gaussian noise to the weights. In this case, understanding the effect of quantization on privacy protection is crucial. Previous works \cite{FLQuan,wang2021d2pfed} investigated this problem in a scalar quantizer setting. They approximated the process as adding Gaussian noise to the weights permuted by quantization noise. As quantization noise increases the sensitivity of the weight, the authors reached a conclusion that quantization damages privacy protection. However, the effect of quantization on the noise scale was not fully considered in the discussion. Given the noise scale and sensitivity both affect privacy protection, it is possible that quantization can benefit privacy protection. This possibility is further supported by the post-processing theorem of DP, which shows that quantized weights should have at least the same level of DP protection as the non-quantized one \cite{dwork2014algorithmic}. 

In this paper, we propose a theoretical analysis to understand the effect of the quantized Gaussian mechanism on privacy protection. 
Instead of approximating the problem with another form as in \cite{FLQuan,wang2021d2pfed}, we directly derive the privacy budget of quantized Gaussian mechanism from the definition of R\'enyi Differential Privacy (RDP). RDP is a variant of DP, and is computationally more convenient. Our analysis demonstrates lower quantization levels lead to better privacy protection. To verify the theoretical analysis, we experimentally evaluate the privacy leakage by the Membership Inference Attacks (MIA), and use the MIA score to track the privacy loss of the FL model with different quantization levels. The experiment results validate the theoretical findings and show that quantization indeed benefits privacy protection.

\section{System Model and Preliminary} \label{preliminary}

In this section, we introduce some preliminary knowledge for FL and MIA, together with the definition of DP and RDP.

\subsection{Federated Learning with Stochastic Quantization} \label{subsec: quantizedGaussian}
The system model of FL with RDP and Stochastic quantization is illustrated in Fig. \ref{fig_system}. We only discuss FL with stochastic quantization here, and leave the illustration of noise adding and RDP in Seciton \ref{subsec: RDP}. First, we denote the set of private datasets of client $i$ as $\cD_i$, and global weight as $\vw$. Let $F_i(\vw)$ be the empirical loss of a model $\vw$ evaluated over the dataset $\cD_i$ and set $\cD=\cup_{i=1}^n \cD_i$. The training goal of FL is  to obtain the optimal weights vector $\vw^{\rm opt}$ satisfying
\begin{equation}\label{eq:w_opt_def}
    \vw^{\rm opt} = \argmin_{\vw} \left\{F(\vw)\triangleq \sum_{i=1}^n \alpha_i F_i\left(\vw\right)\right\},
\end{equation}
where $\alpha_i=\frac{\abs{\cD_i}}{\abs{\cD}}$ is the averaging coefficients.
 Denote $\vw_t$ as the global weight at iteration $t$. With the FedAvg algorithm, each client $i$ updates $\vw_t^i$ based on local dataset $\cD_i$, the server then updates the global weights as 
\begin{equation}
\label{eq:fl_update}
    \vw_{t+1} \triangleq \sum_{i=1}^n \alpha_i \vw^i_{t}.
\end{equation}

To reduce communication costs and utilize a secure communication scheme, the client transmits the quantized version of the weight update $\vw^i_{t} - \vw_t$ to the server. The $k$-level stochastic quantizer considered in this paper \cite{kQuantize} with clipping is shown below.

\begin{definition}[$k$-Level Stochastic Quantizer with clipping] \label{def:kStochasticQuantizer}
Let $\vw$ be an input vector of dimension $d$. Define the clipping threshold $C_q>0$.
We first clip the input as 
$\tilde{\vw} = \vw \cdot \min\{ 1, \frac{C_q}{\|\vw\|}\}.$
The codomain of the $k$-level quantizer is defined as $\mathbb{S}^d$ such that $\mathbb{S}=\{B(r) | B(r)=-C_q+\frac{2C_q}{k-1} r \text{, for } r=0,1, \ldots, k-1\}$. A $k$-level stochastic quantizer with clipping threshold $C_q$, $Q(\vw ; k; C_q)$, randomly maps each clipped element $\tilde{w}_i$ to one of the two adjacent quantization levels as
\begin{equation}
    \{Q(\vw ; k,C_q)\}_i= \begin{cases}B(r), & \text { w.p. } \frac{B(r+1)-\tilde{w}_i}{B(r+1)-B(r)} \\ B(r+1), & \text { w.p. } \frac{\tilde{w}_i-B(r)}{B(r+1)-B(r)}\end{cases}
\end{equation}
where $\{Q(\vw ; k,C_q)\}_i$ is the $i^{\text{th}}$ element of $Q(\vw ; k,C_q)$, and $B(r) \leq \tilde{w}_i \leq B(r+1)$.
\end{definition}

The update of the global weights with quantization can then be reformulated as:
\begin{equation}\label{eq:fl_update_quant}
    \vw_{t+1} \triangleq \vw_t + \sum_{i=1}^n \alpha_i Q(\vw^i_{t} - \vw_t; k, C_q).
\end{equation}

\begin{figure}[tbp]
\centerline{\includegraphics[width=0.5\textwidth]{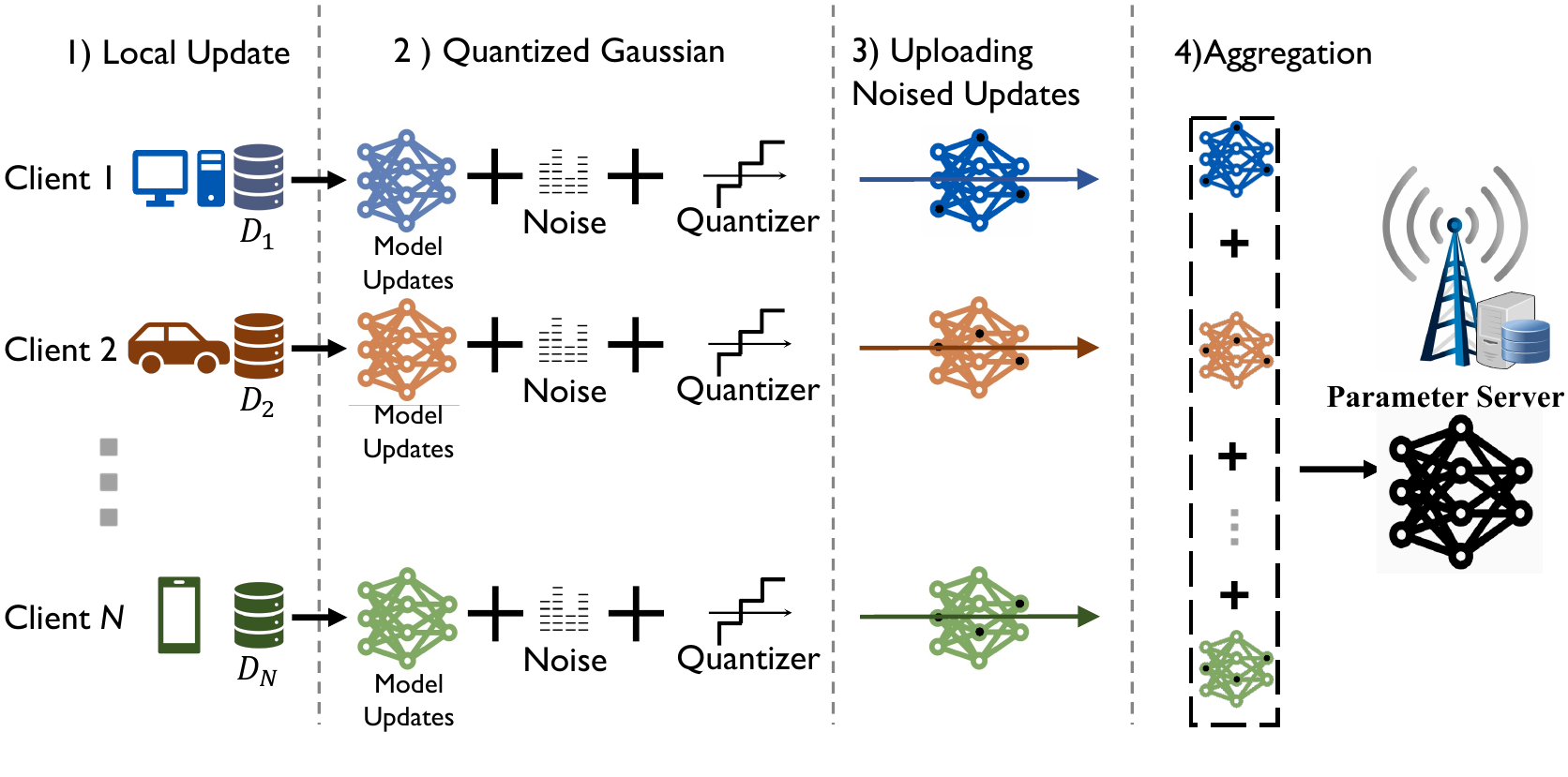}}

\caption{Illustration of Federated Learning with quantized Gaussian mechanism applied. The weight of each client is first perturbed by Gaussian noise and then passed to the quantizer.}
\label{fig_system}
\end{figure}

\subsection{R\'enyi Differential Privacy} \label{subsec: RDP}
RDP proposed in \cite{RenyiDP} can be regarded as a variant of the standard DP. We adopt RDP for our theoretical analysis in Section \ref{theory} mainly because of its computational convenience. Before introducing RDP, we first provide the formal definition of DP \cite{dwork2014algorithmic} as follows. 
\begin{definition} [$(\epsilon, \delta)$-DP]
An algorithm $\mathcal{M}: \mathcal{D}\rightarrow \mathcal{S}$ is $(\epsilon, \delta)$-DP if, for all neighboring databases $D, D' \in \mathcal{D}$ and all $S \subseteq \mathcal{S}$,
\begin{equation}
    \Pr[\mathcal{M}(D)\in S] \leq e^\epsilon \Pr[\mathcal{M}(D')\in S] +\delta,
\end{equation}
where $\epsilon \geq 0$ and $0<\delta<1$.
\end{definition}

The parameter $\epsilon$ is usually referred to as the privacy budget. Next, we define RDP and $\ell_2$-sensitivity. $\ell_2$-sensitivity is an important concept in DP and can be useful in the privacy budget calculation that follows.
\begin{definition}[$(\alpha, \epsilon)$-RDP and $\ell_2$-sensitivity]\label{def:rdp}
A randomized mechanism $f\colon \mathcal{D}\mapsto\mathcal{R}$ is said to have $(\alpha, \epsilon)$-RDP, if for any neighboring $D,D'\in\mathcal{D}$ it holds that 
\begin{equation}
    D_\alpha\left(f(D)\|f(D')\right)\leq\epsilon,
\end{equation}
where for $\alpha > 1$
\begin{equation}
    D_\alpha(P\|Q)\triangleq\frac{1}{\alpha-1}\log \mathbb{E}_{x\sim Q}\left( \frac{P(x)}{Q(x)}\right)^\alpha,
\end{equation}
and for $\alpha=1$
\begin{equation}
    D_1(P\|Q)\triangleq \log \mathbb{E}_{x\sim P} \frac{P(x)}{Q(x)}.
\end{equation}
The $\ell_2$ sensitivity of $f$, denoted by $\Delta_2$ can be defined as 
\begin{equation}
    \Delta_2 = \max_{D,D'} \|f(D)-f(D')\|_2.
\end{equation}
\end{definition}



When $\alpha \rightarrow \infty$, $(\alpha, \epsilon)$-RDP is equivalent to standard $(\epsilon, 0)$-DP. In this work, we adopt the client-level DP, where datasets $D$ and $D'$ are considered neighboring if one is created by either removing or replacing all the data from a single client. 
Client-level DP guarantees that all data associated with the client are protected, and is stronger than item-level protection. It is particularly suited for FL settings that involve many clients, each with a relatively small dataset. We now state one other important lemma of RDP that is useful for our following analysis.

\begin{lemma}[monotonicity]\label{lema:monotonicity}
 $D_{\alpha}$ monotonically increases with $\alpha$. i.e., $D_{\alpha_1}(P\|Q) \leq D_{\alpha_2}(P\|Q)$ for $1\leq\alpha_1\leq\alpha_2\leq\infty$.
\end{lemma}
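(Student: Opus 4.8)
The plan is to recognize $D_\alpha(P\|Q)$ as the slope of a chord of a convex auxiliary function, so that monotonicity in $\alpha$ reduces to the elementary monotone-slope property of convex functions. First I would write $L = P/Q$ for the likelihood ratio and define $\psi(\alpha) \triangleq \log \mathbb{E}_{x\sim Q}\!\left[L^\alpha\right]$. By Definition~\ref{def:rdp}, for $\alpha>1$ we then have $D_\alpha(P\|Q) = \psi(\alpha)/(\alpha-1)$. The crucial normalization is the anchor value $\psi(1) = \log \mathbb{E}_{x\sim Q}[L] = \log\!\int P = 0$, which lets me rewrite
\begin{equation}
D_\alpha(P\|Q) = \frac{\psi(\alpha) - \psi(1)}{\alpha - 1},
\end{equation}
i.e. the slope of the secant of $\psi$ through the abscissae $1$ and $\alpha$.

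The key step is to establish that $\psi$ is convex on $[1,\infty)$. Fixing $\theta\in[0,1]$ and setting $\alpha = \theta\alpha_1 + (1-\theta)\alpha_2$, I would apply H\"older's inequality with conjugate exponents $1/\theta$ and $1/(1-\theta)$ to obtain
\begin{equation}
\mathbb{E}_Q\!\left[L^\alpha\right] = \mathbb{E}_Q\!\left[L^{\theta\alpha_1}\,L^{(1-\theta)\alpha_2}\right] \leq \left(\mathbb{E}_Q\!\left[L^{\alpha_1}\right]\right)^{\theta}\left(\mathbb{E}_Q\!\left[L^{\alpha_2}\right]\right)^{1-\theta}.
\end{equation}
Taking logarithms gives $\psi(\alpha)\leq \theta\psi(\alpha_1)+(1-\theta)\psi(\alpha_2)$, which is exactly convexity.

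With $\psi$ convex and $\psi(1)=0$, I would invoke the standard three-chord lemma: for a convex function, the secant slope from a fixed left endpoint is non-decreasing. Anchoring that endpoint at $1$ and applying it to $1<\alpha_1\leq\alpha_2$ immediately yields $D_{\alpha_1}(P\|Q)\leq D_{\alpha_2}(P\|Q)$ on the open range. It then remains to attach the two endpoints: as $\alpha\to 1^+$ the chord slope tends to $\psi'(1)$, recovering $D_1$ as the infimum by continuity, and as $\alpha\to\infty$ the slope increases to $\log\,\mathrm{ess\,sup}\,L = D_\infty$, the supremum, both consistent with the claimed inequality over the full range $1\leq\alpha_1\leq\alpha_2\leq\infty$.

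The main obstacle I anticipate is not the central inequality but the boundary bookkeeping: confirming the limiting identifications of $D_1$ and $D_\infty$, and ensuring $\psi$ is well defined throughout, i.e. that the moments $\mathbb{E}_Q[L^\alpha]$ are finite (or that the argument degrades gracefully to $+\infty$) so that the convexity argument remains valid across the entire interval rather than merely on a bounded subinterval.
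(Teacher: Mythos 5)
The paper never proves this lemma: it is stated as a known property of R\'enyi divergence imported from the RDP literature and then used in the proof of Theorem~\ref{thm:bound of epsilon}, so there is no in-paper argument to compare against and your proof must stand on its own --- which it does. Writing $\psi(\alpha)=\log\mathbb{E}_{Q}[L^{\alpha}]$, establishing convexity of $\psi$ via H\"older, and reading $D_\alpha$ as the secant slope of $\psi$ anchored at $(1,0)$ is the standard cumulant-function argument and is fully rigorous; the other classical route (van Erven--Harrem\"oes) gets the same inequality in one step from Jensen's inequality applied to $x\mapsto x^{(\alpha_2-1)/(\alpha_1-1)}$, which avoids the chord-slope lemma but is not otherwise simpler. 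Your H\"older step, the anchor $\psi(1)=0$, and the extended-real degradation when moments are infinite are all handled correctly. One caution on the endpoint bookkeeping you flagged: your identification $D_1=\psi'(1^{+})=D_{\mathrm{KL}}(P\|Q)$ is the right one, and it is the reading the paper itself uses in Theorem~\ref{thm:bound of epsilon} (where $D_1$ is equated with $D_{KL}$); note, however, that the displayed formula for $D_1$ in Definition~\ref{def:rdp}, $\log\mathbb{E}_{x\sim P}\left[P(x)/Q(x)\right]$, is a typo --- that expression equals $\psi(2)=D_2$, and under that literal reading the lemma's claim $D_1\leq D_\alpha$ would fail for $1<\alpha<2$. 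Your proof implicitly, and correctly, uses the Kullback--Leibler limit rather than the misprinted formula.
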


From this lemma, we can get that 
    $(\alpha_2, \epsilon)$-RDP implies $(\alpha_1, \epsilon)$-RDP for $\alpha_2>\alpha_1\geq1$.
    
\subsection{Likelihood Ratio Membership Inference Attack}


The task of MIA \cite{shokri2017membership} is to infer if a specific data sample $x$ was one member of the training set. It is the simplest and most widely deployed attack method for auditing privacy leakage. 

Likelihood ratio (LiRA) membership inference attack proposed in \cite{lora} is one state-of-the-art MIA method.
LiRA makes the membership inference by examining the model loss $\mathcal{L}(x; f)$. The logic is that models usually have a lower loss on their training samples. One important assumption of LiRA is that the loss for training and non-training samples both follow Gaussian distributions, denoted as $N_\text{in}(\mu_\text{in}, \sigma^2_\text{in})$ and $N_\text{out}(\mu_\text{out}, \sigma^2_\text{out})$, respectively. This parametric modeling makes LiRA more stable compared to other machine learning based MIAs. Estimating the parameters of these two distributions is crucial for LiRA to make membership inferences. 
 
The attack process of LiRA is as follows. The attacker first trains a set of shadow models with architecture similar to the target model.
Half of the shadow models have $x$ in their training set, denoted as set $\cup_i\{f^{\text{in}_i}\}$, while other half not having $x$ in their training set is denoted as set $\cup_i\{f^{\text{out}_i}\}$. 
$\mu_\text{in}$, $\sigma_\text{in}$ are estimated as mean and variance of $\cup_i\{\mathcal{L}(x; f^{\text{in}_i})\}$, and $\mu_\text{out}$, $\sigma_\text{out}$ are estimated via $\cup_i\{\mathcal{L}(x; f^{\text{out}_i})\}$.
Finally, to infer membership of $x$ for the target model $f$, LiRA computes 
$l = \mathcal{L}(x, f)$ and then compares $\Pr\left[l| N_\text{in}\right]$ and $\Pr\left[l | N_\text{out}\right]$. If $\Pr\left[l| N_\text{in}\right] > \Pr\left[l | N_\text{out}\right]$, the sample $x$ is inferred as a member of the training set of the target model $f$. 

LiRA requests training a set of shadow models for each sample $x$ they are to infer, which can be computationally very expensive. One offline variant of LiRA is to train a set of shadow models beforehand and include no samples we are to infer during shadow model training. The workflow of the offline version of LiRA is illustrated in Fig. \ref{fig_system_lira}. This set of shadow models is qualified to serve as $\cup_i\{f^{\text{out}_i}\}$ for all samples that we need to infer membership. The membership inference of each $x$ is then fully based on $\Pr\left[l | N_\text{out}\right]$.
To save inference time, the offline version of LiRA is adopted in this work. 

\begin{figure}[tbp]
\centerline{\includegraphics[width=0.5\textwidth]{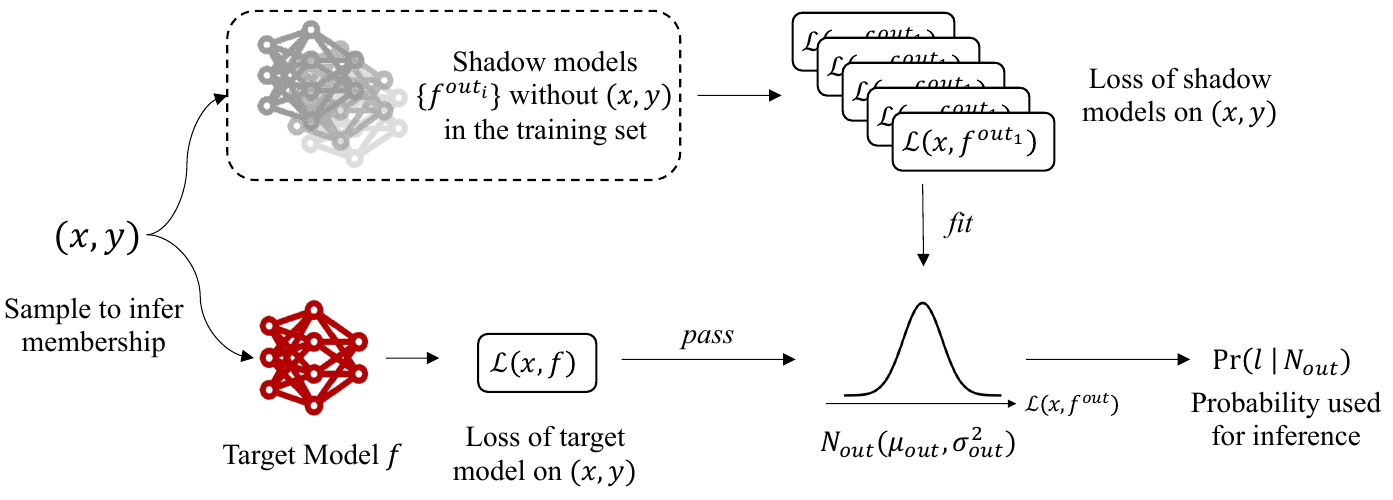}}

\caption{Working flow of the offline variant of LiRA. The attacker trains a set of shadow models that do not contain $x$ in their training set, and utilizes them to estimate $\mu_\text{out}$, $\sigma_\text{out}$. The attacker uses $\Pr\left[l | N_\text{out}\right]$ to infer membership of $x$.}
\label{fig_system_lira}
\end{figure}


\section{Algorithm and Theoretical Analysis} \label{theory}

\subsection{FedAvg with Stochastic Quantization and RDP}
The FedAvg with Stochastic Quantization and RDP is illustrated in Alg. \ref{alg:FL Quantize and RDP}. The noise adding and quantization procedure in lines 8 and 9 demonstrate how quantized Gaussian noise is applied. We assume that perfect encoding and decoding can be achieved during transmission.

In the outlined FL framework, each client first updates its local weights before computing the difference from the global model. To limit the $\ell_2$-sensitivity of this weight discrepancy to $C_q$, it is clipped to $\frac{C_q}{2}$ as specified in line 7. Subsequently, Gaussian noise $N(0, \sigma^2)$ is added to the clipped weight difference to fulfill the privacy preservation criteria. Next, the perturbed weight difference is passed to stochastic k-level quantizer in Def. \ref{def:kStochasticQuantizer} with parameter $C_q$. This quantized difference is then encoded and transmitted to the server for the aggregation process.

\begin{algorithm}[bthp]
\caption{FedAvg with Stochastic Quantization and RDP} \label{alg:FL Quantize and RDP}
\begin{algorithmic}[1]
\STATE{\bfseries Input:} $N$ local devices, each local device dataset $D_i \in \mathcal{D}$ ($i=1, \cdots, N$), stochastic quantizer parameters $(k, C_q)$
\FOR{$t = 0, \cdots, T-1$}
\STATE Server broadcasts $\vw_t$ to $n$ sampled local devices from total $N$ local devices;
\FOR{each local device $i$ in parallel}
\STATE Update the weight $\vw_t$ to $\vw^i_t$
\STATE Calculate the weight difference $\vh^i_t = \vw^i_{t} - \vw_t$
\STATE Clip the difference $$\widetilde{\vh^i_t} = \vh^i_t \cdot \min\{ 1, \frac{C_q}{2\|\{{\vh}^i_t\|}\}$$
\STATE Add noise: $\overline{\vh^i_t} \leftarrow \widetilde{\vh^i_t} + N(0, \sigma^2)$
\STATE Quantize the model difference
$$\overline{\overline{{\vh}^i_t}} = Q(\overline{\vh^i_t} ; k,C_q)$$
\STATE Encode and upload $\overline{\overline{{\vh}^i_t}}$ to the server.
\ENDFOR
\STATE Server decodes $\overline{\overline{{\vh}^i_t}}$
\STATE Server update weights: $\vw_{t+1} \triangleq \vw_t + \sum_{i=1}^n \alpha_i \overline{\overline{{\vh}^i_t}}$
\ENDFOR
\end{algorithmic}
\end{algorithm}

\subsection{Theoretical Analysis} \label{subsec:theory analysis}

In this subsection, we perform the privacy analysis on the quantized Gaussian mechanism with RDP to demonstrate the exact relationship between the quantization level and privacy leakage. First, we present the post-processing theorem to suggest why quantization will not damage privacy protection. 

\begin{theorem}[Post-processing \cite{dwork2014algorithmic}] \label{thm:post-processing}
Let  $f\colon \mathcal{D}\mapsto\mathcal{R}$ be a randomized algorithm that is $(\alpha, \epsilon)$-RDP, let $g: \mathcal{R} \rightarrow \mathcal{R}^{\prime}$ be a randomized mapping, then $g(f(\cdot))$ is also $(\alpha, \epsilon)$-RDP.
\end{theorem}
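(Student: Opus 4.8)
The plan is to establish this result as a direct consequence of the data processing inequality (DPI) for the R\'enyi divergence $D_\alpha$, since post-processing is precisely the statement that applying a (possibly randomized) map to both output distributions cannot increase their divergence. Writing $P=f(D)$ and $Q=f(D')$, the RDP hypothesis reads $D_\alpha(P\|Q)\le\epsilon$, and the goal is to show $D_\alpha(g(P)\|g(Q))\le\epsilon$, where $g(P)$ denotes the law of $g$ applied to a draw from $P$. First I would represent the randomized map $g$ by its Markov kernel $g(s\mid r)$, so that the output densities become the mixtures $P'(s)=\int g(s\mid r)\,P(r)\,dr$ and $Q'(s)=\int g(s\mid r)\,Q(r)\,dr$. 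Since the monotone transformations $\log(\cdot)$ and multiplication by $\tfrac{1}{\alpha-1}>0$ (for $\alpha>1$) preserve the inequality, it suffices to prove the contraction of the ``moment'' quantity $\mathbb{E}_{Q}\bigl(P/Q\bigr)^\alpha$, that is, $\mathbb{E}_{s\sim Q'}\bigl(P'(s)/Q'(s)\bigr)^\alpha \le \mathbb{E}_{r\sim Q}\bigl(P(r)/Q(r)\bigr)^\alpha$.

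The key step I would carry out is a Jensen argument. For each fixed output $s$, define the weights $\lambda_r(s)=g(s\mid r)\,Q(r)/Q'(s)$, which are nonnegative and integrate to one in $r$; then the output likelihood ratio factorizes as a $\lambda(s)$-average of input ratios, $P'(s)/Q'(s)=\mathbb{E}_{r\sim\lambda(s)}\bigl[P(r)/Q(r)\bigr]$. Because $x\mapsto x^\alpha$ is convex for $\alpha>1$, Jensen's inequality gives $\bigl(P'(s)/Q'(s)\bigr)^\alpha \le \mathbb{E}_{r\sim\lambda(s)}\bigl[(P(r)/Q(r))^\alpha\bigr]$. Multiplying by $Q'(s)$, integrating over $s$, and using $\int g(s\mid r)\,ds=1$ to collapse the $s$-integral then yields exactly the desired contraction, and hence $D_\alpha(g(P)\|g(Q))\le D_\alpha(P\|Q)\le\epsilon$. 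The boundary case $\alpha=1$ reduces to the classical DPI for the KL divergence and can be obtained either by the same convexity argument applied to $x\mapsto x\log x$ or by taking the limit $\alpha\to1^{+}$ together with the monotonicity in Lemma~\ref{lema:monotonicity}.

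The main obstacle I anticipate is not the algebra but the rigorous setup: treating $g$ uniformly as a Markov kernel so that the mixture representation is valid whether $g$ is deterministic or randomized and whether $\mathcal{R},\mathcal{R}'$ are discrete or continuous, and ensuring the weights $\lambda_r(s)$ are well defined by handling the null sets where $Q'(s)=0$. A minor subtlety is tracking sign conventions so that post-composition with $\log$ and with $\tfrac{1}{\alpha-1}$ preserves rather than reverses the inequality; since $\alpha>1$ this factor is positive and the direction is maintained. As the statement is the standard DPI for the R\'enyi divergence cited from \cite{dwork2014algorithmic}, I would keep the argument short, emphasizing the Jensen contraction as the single essential inequality.
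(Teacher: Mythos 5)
Your argument is correct, but there is nothing in the paper to compare it against: the paper states Theorem \ref{thm:post-processing} as a quoted, cited result and gives no proof (the citation points at Dwork--Roth, though the RDP form of the statement is due to Mironov's RDP paper, which the authors cite elsewhere as \cite{RenyiDP}). So you have supplied the proof the citation stands in for, rather than an alternative to a paper-internal argument. What you give is the standard data-processing inequality for R\'enyi divergence: represent $g$ by a Markov kernel, write the output likelihood ratio as the mixture $P'(s)/Q'(s)=\mathbb{E}_{r\sim\lambda(s)}\left[P(r)/Q(r)\right]$ with weights $\lambda_r(s)=g(s\mid r)Q(r)/Q'(s)$, apply Jensen's inequality with the convex function $x\mapsto x^\alpha$, and collapse the $s$-integral using $\int g(s\mid r)\,ds=1$. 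This is exactly the proof found in the literature, and every step checks out (the interchange of integrals is justified by Tonelli since the integrand is nonnegative). Two additions would make it airtight. First, dispose of absolute continuity up front: if $P\not\ll Q$ then $D_\alpha(P\|Q)=\infty$ for $\alpha>1$ and the claim is vacuous, so you may assume $P\ll Q$, which also makes the ratios and your weights well defined outside the $Q'$-null set you flag. Second, the paper uses the theorem implicitly across the whole range $\alpha\in[1,\infty]$ (its Theorem \ref{thm:bound of epsilon} works with $D_\infty$), while your Jensen step covers finite $\alpha>1$ plus the KL case; the $\alpha=\infty$ case follows from the same mixture identity even more cheaply, since an average cannot exceed a supremum, giving $P'(s)/Q'(s)\le \sup_r P(r)/Q(r)\le e^{D_\infty(P\|Q)}$ pointwise, or alternatively by letting $\alpha\to\infty$ and invoking the monotonicity in Lemma \ref{lema:monotonicity}. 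With those two remarks your proof is complete and covers every case the paper relies on.
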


Following the above post-processing theorem, if the Gaussian noise in Alg. \ref{alg:FL Quantize and RDP} ensures $(\alpha, \epsilon)$-RDP, the quantized Gaussian will also ensure $(\alpha, \epsilon)$-RDP. 
Next, we derive the privacy budget $\epsilon$ of the quantized Gaussian mechanism directly from the definition of RDP and show that a lower quantization level leads to better privacy protection. 

Mathematically, the output of the quantized Gaussian mechanism applied on a scalar $x$ can be represented as
\begin{equation}
    {G}_{\sigma, k, C_q} x=
    Q\left(x+N(0,\sigma^2), k, C_q\right),
\end{equation}
where $N(0,\sigma^2)$ denotes the Gaussian distribution with standard deviation $\sigma$ and mean 0, and $Q(x, k, C_q)$ is the k-level stochastic quantizer introduced in Def.~\ref{def:kStochasticQuantizer}.

Following Def.~\ref{def:rdp}, to calculate the privacy budget $\epsilon$ for quantized Gaussian mechanism with $\ell_2$-sensitivity $\Delta_2$ under $\alpha$, it is essential to first obtain the probability mass function (pmf) of the output of quantized Gaussian. Represent the pmf of the output of quantized Gaussian on $x$ and $P_x$. $\epsilon$ can then be calculated as the supremum of 
$D_{\alpha}(P_x\|P_{x'})$, where $\|x-x'\|_2 \leq \Delta_2$. The following lemma provides the pmf of the quantized Gaussian.

\begin{lemma} \label{lemma: pdfOFquantizedGAUSSIAN}
    For quantized Gaussian parameterized by $(\epsilon, k, C_q)$, define the distance between two quantization levels as $\delta = \frac{2C_q}{k-1}$.
    Further define $B(r)=-C_q+\frac{2C_q}{k-1} r \text{, for } r=0,1, \ldots, k-1$.
    Assume $x\in\mathbb{R}$ and $ x \in [-\frac{C_q}{2}, \frac{C_q}{2}]$, then
    \begin{equation}
    \begin{split}
        P_x & \left[B(k-1)\right] \\
        & = \int_{B(k-2)}^{B(k-1)}f(x)\frac{x-B(k-2)}{\delta}dx + \int_{B(k-1)}^{\infty}f(x)dx\\
        P_x & \left[B(0)\right]\\
        & = \int_{-\infty}^{B(0)}f(x)dx + \int_{B(0)}^{B(1)}f(x)\frac{B(1)-x}{\delta}dx,\\
    \end{split}
    \end{equation}
    when $1 \leq r < k-1$, we have:
    \begin{multline}
        P_x  \left[  B(r)\right]
         = \int_{B(r-1)}^{B(r)}f(x)\frac{x-B(r-1)}{\delta}dx \\ + \int_{B(r)}^{B(r+1)}f(x)\frac{B(r+1)-x}{\delta}dx,\\
    \end{multline}
    where f(x) is the probability density function of the normal distribution $N(x,\sigma^2)$.
\end{lemma}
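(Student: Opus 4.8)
The plan is to evaluate each $P_x[B(r)]$ through the law of total probability, conditioning on the realized value of the perturbed scalar $Y \triangleq x + N(0,\sigma^2)$, whose density is exactly $f$, the density of $N(x,\sigma^2)$. Because the mechanism outputs the stochastic quantization of $Y$ (clipping included inside $Q$ by Def.~\ref{def:kStochasticQuantizer}), I can write
\begin{equation}
    P_x[B(r)] = \int_{-\infty}^{\infty} f(y)\,\Pr\left[Q(y;k,C_q)=B(r)\right]\,dy,
\end{equation}
where the inner probability is over the internal randomness of the quantizer alone. The whole argument then reduces to identifying, for each target level $B(r)$, the set of realizations $y$ that can map to $B(r)$ and the conditional probability of doing so, and to assembling the corresponding integrals.

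For an interior level $1\le r<k-1$, the central fact from Def.~\ref{def:kStochasticQuantizer} is that a clipped value is only ever rounded to one of its two immediately adjacent levels. Thus $B(r)$ arises from exactly two intervals: when $y\in[B(r-1),B(r)]$, in which $B(r)$ is the upper endpoint and is chosen with probability $\frac{y-B(r-1)}{\delta}$, and when $y\in[B(r),B(r+1)]$, in which $B(r)$ is the lower endpoint and is chosen with probability $\frac{B(r+1)-y}{\delta}$. Substituting these into the total-probability integral yields the stated two-term expression directly; every other realization rounds to a level different from $B(r)$ with probability one and hence contributes nothing.

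The boundary levels $B(0)$ and $B(k-1)$ demand extra care, and this is where I expect the only real subtlety to lie, since the clipping map $\tilde y = y\cdot\min\{1,C_q/|y|\}$ must now be tracked explicitly. Every realization $y<B(0)=-C_q$ is clipped to exactly $-C_q=B(0)$ and is therefore quantized to $B(0)$ deterministically, contributing the whole tail mass $\int_{-\infty}^{B(0)}f(y)\,dy$; added to this, realizations $y\in[B(0),B(1)]$ round down to $B(0)$ with probability $\frac{B(1)-y}{\delta}$, producing the second term. The level $B(k-1)=C_q$ is symmetric, with the upper tail $y>B(k-1)$ clipped to $C_q$ and mapped to $B(k-1)$ with probability one, plus the contribution from $[B(k-2),B(k-1)]$. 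The hypothesis $x\in[-\tfrac{C_q}{2},\tfrac{C_q}{2}]$ merely records the operating regime of Alg.~\ref{alg:FL Quantize and RDP} and is not needed for the decomposition, which holds for any mean; as a final sanity check, summing $P_x[B(r)]$ over all $r$ telescopes to $\int_{-\infty}^{\infty}f(y)\,dy=1$, confirming that the piecewise conditional probabilities form a valid partition of the quantizer's output.
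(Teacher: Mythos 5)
Your proof is correct. The paper states this lemma without any proof, treating it as a direct consequence of Definition~\ref{def:kStochasticQuantizer}, and your law-of-total-probability derivation---conditioning on the realized noisy value $y = x + N(0,\sigma^2)$, applying the two-sided rounding probabilities $\frac{y-B(r-1)}{\delta}$ and $\frac{B(r+1)-y}{\delta}$ for interior levels, and folding the clipped tail masses into the extreme levels $B(0)$ and $B(k-1)$---is exactly the computation the authors leave implicit. Your two side remarks are also accurate: the hypothesis $x\in[-\frac{C_q}{2},\frac{C_q}{2}]$ plays no role in this decomposition (it is only needed later, in the monotonicity argument of Lemma~\ref{lemma:epsilonInfinity}), and the sum-to-one check confirms the case split over $y$ is exhaustive and disjoint.
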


With the derived pmf of quantized Gaussian, we obtain the RDP of the quantized Gaussian mechanism in the following theorem.

\begin{theorem} \label{thm:bound of epsilon}
    Let $k, C_q, \delta$ be parameters in Lemma \ref{lemma: pdfOFquantizedGAUSSIAN}. For scalar $x$ and $x'$ in $\left[-\frac{C_q}{2}, \frac{C_q}{2}\right]$, represent the pmf of the output of quantized Gaussian on $x$ and $x'$ as $P_x$ and $P_{x'}$, respectively. Then for the quantized Gaussian mechanism, we have 
    \begin{equation}\label{eq:epsilon1}
    D_1 (P_{x}||P_{{x'}}) 
        \leq D_{KL} (P_{\frac{C_q}{2}}||P_{-\frac{C_q}{2}})
    \end{equation}
    and 
    \begin{equation}
    \begin{aligned}
        D_{\alpha} (P_{x}||P_{{x'}})& \leq D_{\infty} (P_{x}||P_{{x'}})\\
        & \leq
        \log \Big(\frac{{\delta}}{\int_{B(k-2)}^{B(k-1)}f(x)\left({x-B(k-2)}\right)dx} \Big),
    \end{aligned}
    \end{equation}
    where f is the probability density function of the normal distribution $N(-\frac{C_q}{2},\sigma^2)$, and $D_{KL}$ represents the Kullback-Leibler divergence.
\end{theorem}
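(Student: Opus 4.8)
The plan is to treat the two bounds separately, in both cases reducing the optimization over the pair $(x,x')$ to the extreme configuration $x=\frac{C_q}{2},\,x'=-\frac{C_q}{2}$. This reduction is dictated by the monotone likelihood ratio (MLR) structure of the Gaussian location family $\{N(x,\sigma^2)\}$ and its preservation under the stochastic quantizer, whose transition weights are monotone in the input. I would handle the Rényi bound first, since it is the cleaner of the two.

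For the Rényi bound, I would first invoke the monotonicity lemma (Lemma~\ref{lema:monotonicity}) to write $D_\alpha(P_x\|P_{x'})\le D_\infty(P_x\|P_{x'})=\log\max_r \frac{P_x[B(r)]}{P_{x'}[B(r)]}$, so it suffices to bound this maximal ratio. Since every $P_x[B(r)]$ is a probability, the numerator is at most $1$, giving $\max_r \frac{P_x[B(r)]}{P_{x'}[B(r)]}\le \frac{1}{\min_r P_{x'}[B(r)]}$. The crux is then a lower bound on $\min_r P_{x'}[B(r)]$ uniform over $x'\in[-\frac{C_q}{2},\frac{C_q}{2}]$. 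I would argue that for each $x'$ the least-probable level is the one farthest from $x'$ (an extreme level $B(0)$ or $B(k-1)$), and that this minimal mass is in turn smallest when $x'$ sits at the opposite boundary; by the symmetry $B(r)\leftrightarrow B(k-1-r)$ this common worst case is $P_{-C_q/2}[B(k-1)]$. Finally, using the closed form of $P_{-C_q/2}[B(k-1)]$ from Lemma~\ref{lemma: pdfOFquantizedGAUSSIAN} and discarding the non-negative tail term $\int_{B(k-1)}^{\infty}f\,dt$, I obtain $\min_r P_{x'}[B(r)]\ge \frac{1}{\delta}\int_{B(k-2)}^{B(k-1)}f(t)\,(t-B(k-2))\,dt$ with $f$ the density of $N(-\frac{C_q}{2},\sigma^2)$, which rearranges under $\log$ into exactly the stated quantity.

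For the KL bound, since $D_1=D_{KL}$ it remains to show that $D_{KL}(P_x\|P_{x'})$ is maximized over the square $[-\frac{C_q}{2},\frac{C_q}{2}]^2$ at the corner $(\frac{C_q}{2},-\frac{C_q}{2})$. I would establish this through the MLR property: the family $\{N(x,\sigma^2)\}$ is ordered in $x$, and because the quantizer assigns each point to adjacent levels with weights monotone in the input, the induced pmfs $\{P_x\}$ remain MLR-ordered in $x$. Spreading $x$ and $x'$ apart along this order can only increase the divergence, so the supremum is attained at the boundary values, with the asymmetry of KL fixing the order $x=\frac{C_q}{2}$ above $x'=-\frac{C_q}{2}$.

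I expect the main obstacle to be precisely this monotonicity argument that pins the extremizer at the boundary. For the $D_\infty$ bound it is comparatively clean, since it concerns only the single farthest (hence least-probable) level together with a one-dimensional monotonicity of that level's mass in $x'$. For the KL bound it is more delicate: the divergence is a sum of per-level log-ratios rather than a single ratio, so termwise monotonicity does not suffice, and I would need to lean on the full MLR / stochastic-ordering structure (or a direct sign analysis of $\partial_x D_{KL}(P_x\|P_{x'})$ and $\partial_{x'} D_{KL}(P_x\|P_{x'})$) to conclude that the maximum sits at the corner.
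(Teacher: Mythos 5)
Your treatment of the $D_\infty$ bound follows the same skeleton as the paper's proof: Lemma~\ref{lema:monotonicity} to pass to $D_\infty$, write $D_\infty$ as the sup of log-ratios, bound the numerator by $1$, and lower-bound the worst-case denominator via Lemma~\ref{lemma: pdfOFquantizedGAUSSIAN}. However, it contains a genuine gap: your claim that for each $x'$ the least-probable level is the one \emph{farthest} from $x'$ (an extreme level $B(0)$ or $B(k-1)$) is false in general, because the extreme levels absorb the Gaussian tails. By Lemma~\ref{lemma: pdfOFquantizedGAUSSIAN}, $P_{-C_q/2}[B(k-1)]$ includes the tail term $\int_{B(k-1)}^{\infty} f(t)\,dt$, which need not be small: when $\sigma$ is large relative to $C_q$ (and $k\ge 3$), this term is of order $\Pr[N(-\tfrac{C_q}{2},\sigma^2) > B(k-1)]\approx \tfrac12$, while the interior level $B(k-2)$ has mass roughly $\tfrac{\delta}{\sqrt{2\pi}\,\sigma}\ll\tfrac12$. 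In that regime the minimum over $r$ sits at $B(k-2)$, not at $B(k-1)$, so your identification of the minimizer — and the chain of inequalities built on it — breaks down. The final displayed bound happens to remain true, but your argument does not establish it.

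The paper closes exactly this hole by lower-bounding the denominator with $\min\{P_{-C_q/2}[B(k-2)],\,P_{-C_q/2}[B(k-1)]\}$ and showing that \emph{both} candidates exceed $\frac{1}{\delta}\int_{B(k-2)}^{B(k-1)} f(t)\,(t-B(k-2))\,dt$: for $B(k-1)$ one discards the tail term, as you do; for $B(k-2)$ one uses that $f$ is decreasing on $[B(k-2),B(k-1)]$ (this interval lies to the right of the mean $-\tfrac{C_q}{2}$ when $k\ge 3$), so that $\int_{B(k-2)}^{B(k-1)} f(t)\tfrac{B(k-1)-t}{\delta}\,dt \ge \int_{B(k-2)}^{B(k-1)} f(t)\tfrac{t-B(k-2)}{\delta}\,dt$, and the left-hand side is itself a lower bound on $P_{-C_q/2}[B(k-2)]$. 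Adding this case repairs your proof. On the KL bound, you actually attempt more than the paper does: the paper's proof simply \emph{asserts} $D_1(P_{x}\|P_{x'}) \le D_1(P_{C_q/2}\|P_{-C_q/2})$ with no justification, whereas you propose an MLR/stochastic-ordering argument (or a sign analysis of partial derivatives) to pin the maximizer at the corner. That is a sensible route, but it remains a sketch: the key step — that spreading MLR-ordered parameters apart increases KL divergence — is itself nontrivial for this non-exponential discrete family and would need a proof. A similar unproven corner-reduction (that $x'=-\tfrac{C_q}{2}$ is the worst case for the $D_\infty$ denominator) is glossed over both in your proposal and in the paper.
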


\begin{proof}
    When $\alpha=1$, we have
    \begin{equation}
    \begin{aligned}
        D_1 (P_{x}||P_{{x'}}) 
        & \leq 
        D_1 (P_{\frac{C_q}{2}}||P_{-\frac{C_q}{2}})\\
        & = D_{KL} (P_{\frac{C_q}{2}}||P_{-\frac{C_q}{2}}),\\
    \end{aligned}
    \end{equation}
    For $\alpha >1$, we have
    \begin{equation}
    \begin{aligned}
         D_\alpha  & (P_{x}||P_{{x'}}) \\
        \,&\leq   D_\infty (P_{x}||P_{{x'}}) \\
        \,&=  \sup_{r \in \{0, \cdots, k-1\}} \log \Big( \frac{P_{x}[B(r)]}{P_{{x'}}[B(r)]} \Big)\\
        \,&\leq  \log \Big(\frac{1}{\min\{P_{-\frac{C_q}{2}}[B(k-2)], P_{-\frac{C_q}{2}} [B(k-1)]\}} \Big) \\
        & < \log \Big(\frac{{\delta}}{\int_{B(k-2)}^{B(k-1)}f(x)\left({x-B(k-2)}\right)dx} \Big), \\
    \end{aligned}
    \end{equation}
    where the first inequality follows the monotonicity of the R\'enyi divergence presented in Lemma \ref{lema:monotonicity}.
\end{proof}

Our analysis focuses on cases where $\alpha=1$ and $\alpha \rightarrow \infty$. Denote the privacy budget in these two scenarios as $\epsilon_1$ and $\epsilon_{\infty}$, respectively. Both $\epsilon_1$ and $\epsilon_{\infty}$ can be computed numerically given clipping parameter $C_q$ and quantization level $k$. With the monotonicity of R\'enyi Divergences, $\epsilon_1$ and $\epsilon_{\infty}$ are the lower and upper bounds of the privacy budget, $\epsilon$, for any $\alpha$ within $[1, \infty]$. 

In section \ref{subsec:theoryExperiment}, we experimentally demonstrate that $\epsilon_1$ increases with quantization level $k$ given clipping parameter $C_q$, which means that a lower quantization bit leads to better privacy protection. We can also show that $\epsilon_1$ is tighter than the privacy budget of the Gaussian mechanism when $\alpha=1$, indicating that quantization indeed benefit privacy. When $\alpha \rightarrow \infty$, $(\alpha, \epsilon_{\infty})$-RDP is equivalent to $(\epsilon_{\infty}, 0)$-DP as in the traditional definition of DP. We can further prove, instead of numerically showing, that $\epsilon_{\infty}$ increases monotonically with the quantization level $k$ given $C_q$ in the Lemma \ref{lemma:epsilonInfinity} that follows. 

\begin{lemma}\label{lemma:epsilonInfinity}
    $\epsilon_{\infty}$ increases monotonically with $k$ given $C_q$
\end{lemma}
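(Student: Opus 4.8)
The plan is to regard the upper bound on $\epsilon_\infty$ from Theorem~\ref{thm:bound of epsilon} as an explicit function of the single continuous variable $\delta = \frac{2C_q}{k-1}$ and to prove it is strictly \emph{decreasing} in $\delta$. Since $\delta$ is strictly decreasing in $k$ for fixed $C_q$, and since $k$ ranges over integers so that monotonicity of the continuous extension suffices, this immediately gives that $\epsilon_\infty$ is strictly increasing in $k$.

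First I would rewrite the bound so that its only $k$-dependence is through $\delta$. Using $B(r) = -C_q + \delta r$ we have $B(k-1) = C_q$ and $B(k-2) = C_q - \delta$, so Theorem~\ref{thm:bound of epsilon} reads
\[
\epsilon_\infty(\delta) = \log \delta - \log I(\delta), \qquad I(\delta) = \int_{C_q-\delta}^{C_q} f(x)\,(x - C_q + \delta)\,dx,
\]
where $f$ is the $N(-\tfrac{C_q}{2},\sigma^2)$ density and, crucially, does not depend on $k$. Differentiating gives $\frac{d\epsilon_\infty}{d\delta} = \frac{1}{\delta} - \frac{I'(\delta)}{I(\delta)}$, so the whole problem reduces to computing $I'(\delta)$ and determining the sign of the derivative.

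The crux is that both the lower limit and the integrand of $I(\delta)$ depend on $\delta$. Applying the Leibniz integral rule, the boundary contribution at the lower limit $x = C_q - \delta$ vanishes, because the integrand factor $(x - C_q + \delta)$ is exactly $0$ there; only the derivative of the integrand survives, leaving the clean expression $I'(\delta) = \int_{C_q-\delta}^{C_q} f(x)\,dx$. Substituting this and clearing the positive denominators, the inequality $\frac{d\epsilon_\infty}{d\delta} < 0$ becomes equivalent to $\int_{C_q-\delta}^{C_q} f(x)\,(x - C_q)\,dx < 0$. Since $x - C_q \le 0$ on the whole interval $[C_q-\delta, C_q]$ (with strict inequality off the right endpoint) and $f > 0$, this integral is strictly negative, so $\epsilon_\infty$ is strictly decreasing in $\delta$ and hence strictly increasing in $k$.

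I expect the one delicate point to be the differentiation of $I(\delta)$: one must remember that the lower limit is itself a function of $\delta$ and verify that its Leibniz boundary term drops out, which it does precisely because the weight $(x-C_q+\delta)$ is zero at $x = C_q-\delta$. After that, the sign reduction to $\int_{C_q-\delta}^{C_q} f(x)(x-C_q)\,dx$ is elementary. The only other thing to flag is the integer-versus-continuous subtlety, handled by the remark that strict monotonicity of the continuous extension in $\delta$ (equivalently in $k$) implies the claimed monotonicity along integer $k$.
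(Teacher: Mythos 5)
Your proof is correct, and although it follows the same high-level strategy as the paper---rewrite the bound from Theorem~\ref{thm:bound of epsilon} as a function of $\delta=\frac{2C_q}{k-1}$ and show it is decreasing in $\delta$---the execution is genuinely different and, in fact, more robust. The paper sets $g(\delta)=\frac{1}{\delta}\int_{C_q-\delta}^{C_q}f(x)\,(x-C_q+\delta)\,dx$, changes variables to $u=\frac{x-(C_q-\delta)}{\delta}\in[0,1]$, differentiates under the integral sign, and concludes $\frac{\partial g}{\partial\delta}>0$ by claiming the resulting integrand, proportional to $u f(x)\bigl(1+(1-u)\frac{x+C_q/2}{\sigma^2}\bigr)$, is pointwise positive, justified by ``$x\in[-\frac{C_q}{2},\frac{C_q}{2}]$ from Lemma~\ref{lemma: pdfOFquantizedGAUSSIAN}.'' That justification conflates the mechanism's clipped \emph{input} with the \emph{integration variable}, which actually ranges over $[C_q-\delta,C_q]$; for $k=2$ (so $\delta=2C_q$) and small $\sigma$ one has $x+\frac{C_q}{2}\approx-\frac{C_q}{2}$ near $u=0$ and the integrand is negative there, so the paper's pointwise claim fails in general even though the integral itself remains positive (your expression and the paper's agree after an integration by parts). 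Your route sidesteps this entirely: differentiating $I(\delta)=\int_{C_q-\delta}^{C_q}f(x)(x-C_q+\delta)\,dx$ by the Leibniz rule, the boundary term vanishes because the weight is zero at the lower limit, giving $I'(\delta)=\int_{C_q-\delta}^{C_q}f(x)\,dx$, and the sign of $\frac{d\epsilon_\infty}{d\delta}$ reduces to $\int_{C_q-\delta}^{C_q}f(x)(x-C_q)\,dx<0$, which holds unconditionally since $x-C_q\le 0$ on the domain of integration and $f>0$. Thus your argument establishes the lemma for all $\sigma$, $C_q$, and $k\ge 2$ without any pointwise-positivity step, and in that sense it repairs a gap in the paper's own proof; your handling of the integer-versus-continuous issue via monotonicity of the continuous extension is also the right (implicitly assumed) remark.
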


\begin{proof}
To prove $\epsilon_{\infty}$ increases with $k$ given $C_q$, it is sufficient to show that $\epsilon_{\infty}$ decreases with $\delta$, where $\delta = \frac{2C_q}{k-1}$.
    First define 
    $g(\delta) = {\int_{C_q-\delta}^{C_q}f(x)\left({\frac{x-(C_q-\delta)}{\delta}}\right)dx},$
    then $\epsilon_{\infty} = \log(\frac{1}{g(\delta)})$. Set $u=\frac{x-(C_q-\delta)}{\delta}$, then we have
    \begin{equation} \label{equ:mono}
        \begin{aligned}
        \frac{\partial g}{\partial \delta} 
        = & \frac{\partial}{\partial \delta}\left(\delta u\int_{0}^{1}f(x) du\right) \\
        = & u\int_{0}^{1}f(x)du + \delta u\int_{0}^{1}\frac{\partial f(x)}{\partial x}\frac{\partial x}{\partial \delta}du \\
        = & u\int_{0}^{1}f(x)du + \delta u(1-u) \int_{0}^{1}\frac{x+\frac{C_q}{2}}{\sigma^2}f(x)du \\
        = & uf(x) \int_{0}^{1} \left(1+(1-u)\frac{x+\frac{C_q}{2}}{\sigma^2}\right)du.
        \end{aligned}
    \end{equation}
    As $x \in [-\frac{C_q}{2}, \frac{C_q}{2}]$ from Lemma \ref{lemma: pdfOFquantizedGAUSSIAN}. The term in the last integral of \eqref{equ:mono} is always larger than 0. Therefore $\frac{\partial g}{\partial \delta} > 0$, $\frac{\partial \epsilon_{\infty}}{\partial \delta} < 0$. As $\delta = \frac{2C_q}{(k-1)}$, we conclude that $\epsilon_{\infty}$ decrease with $k$ with fixed $C_q$.
\end{proof}

\section{Experiment Results} \label{experiment}

In this section, we conduct experiments to validate the theoretical analysis. We first show that quantized Gaussian has a tighter privacy budget compared to the Gaussian mechanism. Next, we expand our experiments beyond the privacy guarantees of the quantized Gaussian mechanism itself, and study how quantized Gaussian integrates with the FL framework described in Section \ref{subsec: quantizedGaussian}. Specifically, we employ MIA to empirically verify the correlation between quantization level and privacy leakage in the FL setting.

\subsection{Numerical Evaluation of the Privacy Budget} \label{subsec:theoryExperiment}
In Section \ref{subsec:theory analysis}, we derive the privacy budget $\epsilon_1$ and $\epsilon_{\infty}$ of quantized Gaussian for $\alpha = 1$ and $\alpha \rightarrow \infty$.
Next, we numerically compute and compare the privacy budgets of quantized Gaussian and Gaussian. 
Following Theorem \ref{thm:post-processing}, $\epsilon_1$ and $\epsilon_{\infty}$ should be as tight as the privacy budget of Gaussian. 
Referring to \cite{RenyiDP}, with the $\ell_2$-sensitivity set to $C_q$, the Gaussian mechanism garantees to $(\alpha, \frac{\alpha {C_q}^2}{2\sigma^2})$-Renyi Differential Privacy (RDP), where $\sigma$ represents the noise variance. 

In the limit as $\alpha \rightarrow \infty$, the term $\frac{\alpha {C_q}^2}{2\sigma^2}$ approaches infinity, whereas $\epsilon_{\infty}$ for the quantized Gaussian remains finite. For a fair and meaningful comparison, we compare the budget of two mechanisms when $\alpha = 1$. We change the quantization levels $k$ while keeping $\ell_2$-sensitivity $\Delta_2= 1$ and noise variace $\sigma = 1$. The relationship between the privacy budget and quantization levels is depicted in Fig. \ref{fig_bdgetplot}.


\begin{figure}[b]
\centerline{\includegraphics[width=0.4\textwidth]{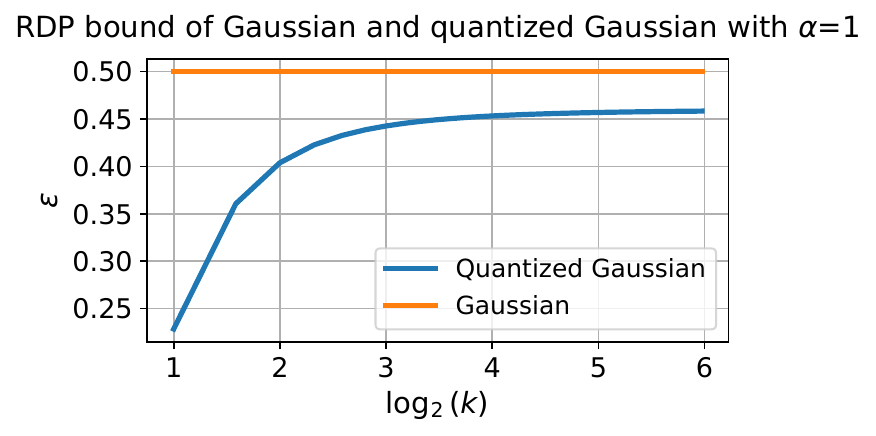}}
\caption{Comparison of RDP Privacy budgets for Gaussian and quantized Gaussian mechanisms at $\alpha=1$, with varying quantization levels $k$ from 2 to 64 while keeping $\sigma=1$ and $\Delta_2=1$. $x$-axis depicted in a logarithmic scale represents the quantization level $k$.}
\label{fig_bdgetplot}
\end{figure}

From Fig. \ref{fig_bdgetplot}, we observe that the privacy budget of quantized Gaussian, illustrated by the blue curve, increases monotonically with quantization level $k$. This confirms that a lower quantization level offers better privacy protection. Furthermore, at all quantization levels for $\alpha=1$, the quantized Gaussian mechanism demonstrates a more stringent privacy budget compared to the Gaussian mechanism, indicating the benefits of quantization for privacy protection.


\subsection{FedAvg with quantized Gaussian and MIA setting}

This subsection introduces the setup for the experiment results in section \ref{subsec: experiment result}. We start by training target models within an FL framework, following Algorithm \ref{alg:FL Quantize and RDP} with various quantization levels and privacy budgets, and compare their actual privacy leakage. The parameters guiding the training of target models are concisely presented in Table \ref{tab:experiment_parameters}.

\begin{table}[tbp]
\caption{Parameters for FL Experiments}
\label{tab:experiment_parameters}
\centering
\begin{tabular}{ll}
\hline
Parameter & Value \\
\hline
Architecture & ResNet-18 \\
Dataset & Cifar-10 \\
Client Number ($N$) & 100 \\
Communication Rounds ($T$) & 150 \\
Total training Samples & 45,700 (i.i.d. among clients) \\
Client Optimizer & Adam \cite{Adam} \\
Batch Size & 128 \\
Learning Rate ($\alpha$) & 0.001 \\
Adam $\beta_1$ & 0.9 \\
Adam $\beta_2$ & 0.999 \\
Clipping Threshold ($C_q$) & 1 \\
\hline
\end{tabular}
\end{table}

To evaluate privacy leakage, we utilize MIA accuracy as the metric. The LiRA method, recognized as the state-of-the-art for MIAs, is employed following \cite{lora}. In our setting, the attacker trains 64 shadow models centrally, each leveraging 20,000 data samples in a centralized manner. Given the constrained dataset size, overlaps between the training datasets of shadow models and the target model are inevitable. However, as justified in \cite{lora}, such overlaps do not compromise the validity of the attack outcomes.

\subsection{MIA accuracy on FedAvg with quantized Gaussian} \label{subsec: experiment result}

In this subsection, we attack FedAvg with quantized Gaussian by MIA, and evaluate the privacy leakage of the model under different quantization levels for a given noise scale injected during training. Specifically, We inject Gaussian noise with a variance of 0.03 in the training process, ensuring $(5, 10^{-5})$ client-level Differential Privacy (DP) for the Gaussian mechanism, as depicted in Fig. \ref{fig_miascore}.

\begin{figure}[tbp]
\centerline{\includegraphics[width=0.33\textwidth]{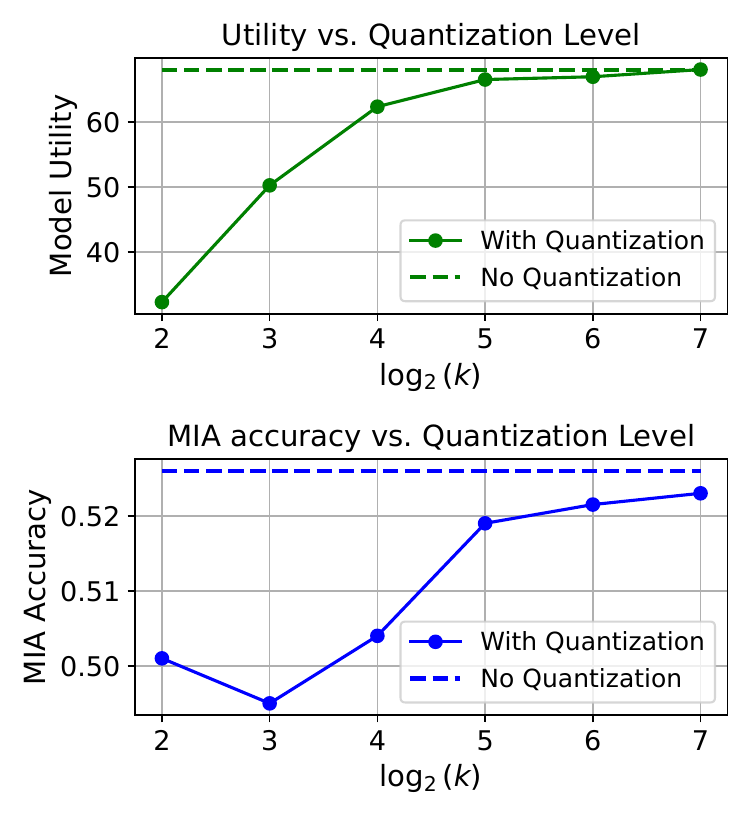}}

\caption{Comparison of utility and MIA accuracy among models employing Gaussian and quantized Gaussian mechanism, all with a noise variance of 0.03. The dashed line represents the model without quantization. $x$-axis depicted in a logarithmic scale represents the quantization level $k$.}

\label{fig_miascore}
\end{figure}

Comparing the utility of FedAvg with quantized Gaussian under different quantization levels,  it is apparent that a lower quantization leads to a lower utility of the model. Here, utility is defined as the accuracy of the model in classifying CIFAR-10 dataset images. Subsequently, we compare the MIA accuracy on models with different quantization levels. The MIA accuracy on the quantized model is always lower than the unquantized one, suggesting that quantization benefits privacy protection. Furthermore, a positive correlation between quantization level and MIA accuracy can also be observed; a larger quantization level leads to a higher MIA accuracy, i.e., worse privacy protection. This observation aligns with Theorem \ref{thm:bound of epsilon} derived in Section \ref{subsec:theory analysis}. 

We further explore the utility and MIA accuracy of FedAvg across different noise scales, with results presented in Fig. \ref{fig_miascore2} for models with quantization levels $k=16$, $k=32$, and without quantization. The $x$-axis, denoted as $\sigma$, represents the variance of Gaussian noise during training. Notably, $\sigma=0.0864$ and $\sigma=0.0164$ ensure $(1.5, 10^{-5})$ and $(10, 10^{-5})$ client-level DP, respectively, for the Gaussian mechanism. An almost monotonic decrease in MIA accuracy with increasing noise scale is observed across all scenarios, underscoring the efficacy of this metric as an indicator of privacy protection. Furthermore quantized model consistently exhibits a lower MIA accuracy compared to the unquantized one, and a lower quantization level corresponds to lower MIA accuracy across nearly all noise scales. This not only aligns with the insights drawn from Fig. \ref{fig_miascore} but also affirms our argument that quantization indeed protects privacy.

\begin{figure}[tbp]
\centerline{\includegraphics[width=0.33\textwidth]{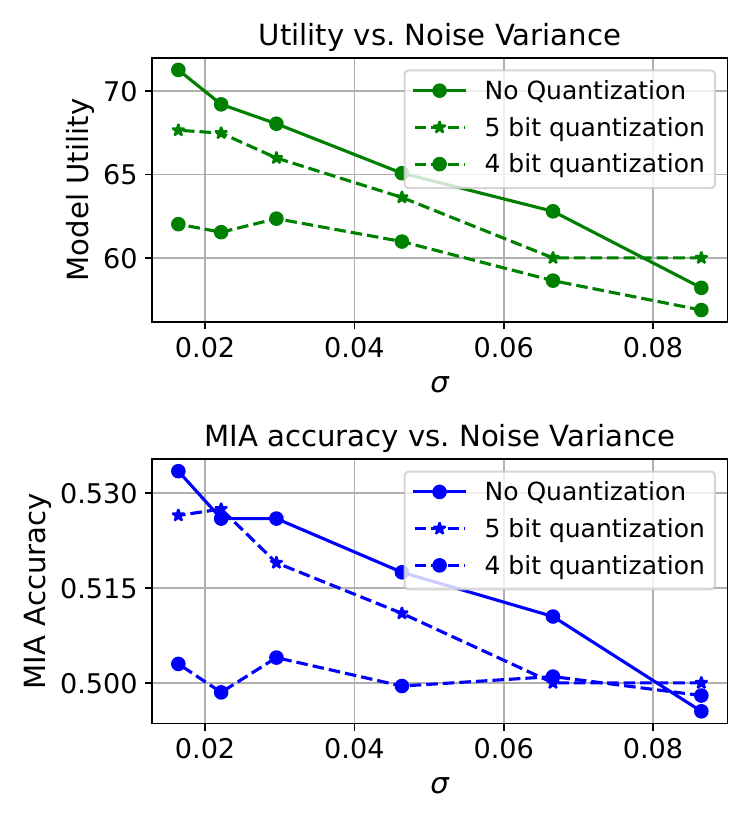}}

\caption{Utility and MIA accuracy comparison among models employing Gaussian mechanism and quantized Gaussian mechanism with quantization level $k=16$ and $k=32$, varying the Gaussian noise variance during training.}

\label{fig_miascore2}
\end{figure}

\section{Conclusions} \label{conclusion}
In this work, we provided a theoretical analysis of the privacy budget of quantized Gaussian mechanisms, demonstrating that lower quantization levels lead to better privacy protection. Numerical results validated the theoretical results and show that quantization effectively enhances privacy protection. 
Our work provides a new perspective on the correlation between privacy and communication in FL, highlighting the benefits of quantization in protecting privacy.

\bibliographystyle{IEEEtran}
\bibliography{IEEEabrv,egbib_updated}

\end{document}